\documentclass[twoside]{article}

\usepackage[accepted]{aistats2021}
%
%


\usepackage[round]{natbib}

\bibliographystyle{apalike}

\usepackage{times}
\usepackage{soul}
\usepackage{url}
\usepackage[hidelinks]{hyperref}
\usepackage[utf8]{inputenc}
\usepackage[small]{caption}
\usepackage{graphicx}
\usepackage{amsmath}
\usepackage{amsthm}
\usepackage{booktabs}
\urlstyle{same}

\usepackage{amsmath}
\usepackage{amssymb}
\usepackage{amsfonts}
\usepackage{amsthm}
\usepackage{bbm}
\usepackage{xparse}
\usepackage{mathtools}
\usepackage{amsmath,bm}
\usepackage{algorithm}
\usepackage{xcolor}
\usepackage[noend]{algpseudocode}
\def\prox{\mathop{\rm prox}\nolimits}

\DeclareMathOperator{\E}{\mathbb{E}}
\usepackage{subcaption}
\newtheorem{lemma}{Lemma}
\usepackage{booktabs}
\makeatletter
\usepackage{bbm}
\def\BState{\State\hskip-\ALG@thistlm}
\makeatother
\usepackage{bbm}
\usepackage{xpatch}
\DeclarePairedDelimiter{\ceil}{\lceil}{\rceil}

\DeclareMathOperator*{\argmin}{arg\,min}
\makeatletter 

\setlength{\tabcolsep}{4pt}

\newtheorem{theorem}{Theorem}
\usepackage{xpatch}
\usepackage{enumitem}

\setitemize{fullwidth}

\makeatletter
\xpatchcmd{\algorithmic}
  {\ALG@tlm\z@}{\leftmargin\z@\ALG@tlm\z@}
  {}{}
\makeatother

\begin{document}

%

%

\twocolumn[

\aistatstitle{Variance Reduced Stochastic Proximal Algorithm for AUC Maximization}

\aistatsauthor{ Soham Dan \And Dushyant Sahoo }

\aistatsaddress{ University of Pennsylvania \And university of Pennsylvania } ]

\begin{abstract}
 Stochastic Gradient Descent has been widely studied with classification accuracy as a performance measure. However, these stochastic algorithms cannot be directly used when non-decomposable pairwise performance measures are used, such as Area under the ROC curve (AUC), a standard performance metric when the classes are imbalanced. There have been several algorithms proposed for optimizing AUC as a performance metric, one of the recent being a Stochastic Proximal Gradient Algorithm (SPAM). However, the downside of stochastic gradient descent is that it suffers from high variance leading to slower convergence. Several variance reduced methods have been proposed with faster convergence guarantees than vanilla stochastic gradient descent to combat this issue. Again, these variance reduced methods are not directly applicable when non-decomposable performance measures are used. In this paper, we develop a Variance Reduced Stochastic Proximal algorithm for AUC Maximization (\textsc{VRSPAM}) and perform a theoretical analysis and empirical analysis. We show that our algorithm converges faster than SPAM, the previous state-of-the-art for the AUC maximization problem.
\end{abstract}

\section{Introduction}
Several applications face an issue of class imbalance---the case where one of the classes occurs much more frequently than the other class \citep{elkan2001foundations}. A concrete example is a medical diagnosis for a rare disease where far fewer instances from the disease class are observed than the healthy class. Traditional classification accuracy is not an appropriate performance metric in this setting, as predicting the majority class will give a high classification accuracy. To overcome this drawback, the Area under the ROC curve (AUC) \citep{fawcett2006introduction} is used as a standard metric for quantifying the performance of a binary classifier in this setting. AUC measures the ability of a family of classifiers to correctly rank an example from the positive class with respect to a randomly selected example from the negative class.

Several algorithms have been proposed for AUC maximization in the batch setting, where all the training data is assumed to be available at the beginning \citep{herschtal2004optimising,zhang2012smoothing}. However, this assumption is unrealistic in several cases, especially for streaming data analysis, where examples are observed one at a time. Several online algorithms have been proposed for the usual classification accuracy metric for such a streaming setting where the per iteration complexity is low \citep{shalev2012online,orabona2014simultaneous}. Despite these several studies on online algorithms for classification accuracy,  the case of maximizing AUC as a performance measure has been looked at only recently \citep{zhao2011online,kar2013generalization}. The main challenge for optimizing the AUC metric in the online setting is the metric's pairwise nature compared to classification accuracy, which decomposes over individual instances. In the AUC maximization framework, in each step, the algorithm needs to pair the current datapoint with all previously seen datapoints leading to $\mathcal{O}(td)$ space and time complexity at step $t$, where the dimension of the instance space is $d$. The problem was not alleviated by the technique of buffering \citep{zhao2011online,kar2013generalization} since good generalization performance depends on maintaining a large buffer. 


From an optimization perspective, the AUC metric is non-convex and thus hard to optimize. Instead, it is attractive to optimize the convex surrogate, which is consistent, such as the pairwise squared surrogate \citep{agarwal2013surrogate,narasimhan2017support,gao2015consistency}.
Recently, \citet{ying2016stochastic} reformulated the pairwise squared loss surrogate of AUC as a saddle point problem and gave an algorithm that has a convergence rate of  $\mathcal{O}(\frac{1}{\sqrt{t}})$. However, they only consider smooth regularization (penalty) terms such as Frobenius norm. Further, their convergence rate is sub-optimal to what stochastic gradient descent (SGD) achieves with classification accuracy as a performance measure $\mathcal{O}(\frac{1}{t})$. \citet{natole2018stochastic} improves on this with a stochastic proximal algorithm for AUC maximization, which under assumptions of strong convexity can achieve a convergence rate of $\mathcal{O}(\frac{\log t}{t})$ and has per iteration complexity of $\mathcal{O}(d)$ i.e., one datapoint and applies to general, non-smooth regularization terms. 

Although \citet{natole2018stochastic} improves convergence for surrogate-AUC maximization, it still suffers from a high variance of the gradient in each iteration. Due to the large variance in random sampling, the stochastic gradient algorithm wastes time bouncing around, leading to worse performance and slower sub-linear convergence rate of $\mathcal{O}(\frac{1}{t})$ (even if we ignore the $\log(t)$ term). Thus, we have a low per iteration complexity for the stochastic algorithm, but slow convergence contrasted with high per iteration complexity and fast convergence for full gradient descent. Thus, it might take longer to get a good approximation of the optimization problem's solution if we employ the algorithm proposed by \citet{natole2018stochastic}.

In the more straightforward context of classification accuracy, techniques to reduce the variance of SGD have been proposed---SAG \citep{roux2012stochastic}, SDCA \citep{shalev2013stochastic}, SVRG \citep{johnson2013accelerating}. While SAG and SDCA require the storage of all the gradients and dual variables, respectively, for complex models, SVRG enjoys the same fast convergence rates as SDCA and SAG but has a much simpler analysis and does not require storage of gradients. This allows SVRG to be applicable in complex problems where the storage of all gradients would be infeasible. 

Several works have explored ways to tackle the presence of a regularizer term and the average of several smooth component function terms in SVRG. Two simple strategies are to use the Proximal Full Gradient and the Proximal Stochastic Gradient method. While the Proximal Stochastic Gradient is much faster since it computes only the gradient of a single component function per iteration, it convergences much slower than the Proximal Full Gradient method. The proximal gradient methods can be viewed as a particular case of splitting methods \citep{bauschke2011convex,beck2008fast}. However, both the proximal methods do not fully exploit the problem structure. Proximal SVRG \citep{xiao2014proximal} is an extension of the SVRG \citep{johnson2013accelerating} technique and can be used whenever the objective function is composed of two terms- the first term is an average of smooth functions (decomposes across the individual instances), and the second term admits a simple proximal mapping. Prox-SVRG needs far fewer iterations to achieve the same approximation ratio than the proximal full and stochastic gradient descent methods. However, an important gap has not been addressed yet - existing techniques that guarantee faster convergence by controlling the variance are not directly applicable to non-decomposable pairwise loss functions as in surrogate-AUC optimization, this is the gap that we close in this paper. 

In this paper, we present Variance Reduced Stochastic Proximal algorithm for AUC Maximization (VRSPAM). VRSPAM builds upon previous work for surrogate-AUC maximization by using the SVRG algorithm. We provide theoretical analysis for the VRSPAM algorithm showing that it achieves a linear convergence rate with a fixed step size (better than SPAM \citep{natole2018stochastic}, which has a sub-linear convergence rate and constantly decreasing step size). Also, the theoretical analysis provided in the paper is more straightforward than the analysis of SPAM. We perform numerical experiments to show that the VRSPAM algorithm converges faster than SPAM.

\section{AUC formulation}
The AUC score associated with a linear scoring function $g(x) = \mathbf{w}^Tx$, is defined as the probability that the score of a randomly chosen positive example is higher than a randomly chosen negative example \citep{hanley1982meaning,clemenccon2008ranking} and is denoted by $\text{AUC}(\mathbf{w})$. If $z=(x,y)$ and $z'=(x',y')$ are drawn independently from an unknown distribution $\mathcal{Z}=\mathcal{X}\times \mathcal{Y}$, then
\begin{align*}
\text{AUC}(\mathbf{w})&=Pr(\mathbf{w}^Tx \geq \mathbf{w}^Tx'| y=1,y'=-1)\\ 
&= \E[\mathbb{I}_{\mathbf{w}^T(x-x')\geq 0}|y=1,y'=-1]
\end{align*}
Since $\text{AUC}(\mathbf{w})$ in the above form is not convex because of the 0-1 loss, it is a common practice to replace this by a convex surrogate loss. In this paper, we focus on the least square loss which is known to be consistent (maximizing the surrogate function also maximizes the AUC). Let $f(\mathbf{w}) = p(1-p)\E[(1-\mathbf{w}^T(x-x'))^2|y=1, y'=-1]$ and  $\Omega$ be the convex regularizer where $p=Pr(y=+1)$ and $1-p=Pr(y=-1)$ are the class priors. We consider the following objective for surrogate-AUC maximization :
\begin{equation}
\begin{aligned}
    \min_{\mathbf{w} \in \mathbb{R}^d} f(\mathbf{w}) + \Omega(\mathbf{w})
    \label{eqn:for}
\end{aligned}
\end{equation}
 The form for $f(\mathbf{w})$ follows from the definition of AUC : expected pairwise loss between a positive instance and a negative instance. Throughout this paper we assume 
\begin{itemize}
    \item 
$\Omega$ is $\beta$ strongly convex i.e. for any $\mathbf{w},\mathbf{w}' \in \mathbb{R}^d, \Omega(\mathbf{w}) \geq \Omega(\mathbf{w}')+\partial \Omega(\mathbf{w}')^T(\mathbf{w}-\mathbf{w}')+\frac{\beta}{2}\|\mathbf{w}-\mathbf{w}'\|^2$ 
\item
$\exists M$ such that $\|x\| \leq M$  $\forall x \in \mathcal{X}$. 
\end{itemize}

In this paper we have used Frobenius norm $\Omega(\mathbf{w})=\beta\|\mathbf{w}\|^2$ and Elastic Net $\Omega(\mathbf{w})=\beta\|\mathbf{w}\|^2+\nu \|\mathbf{w}\|_1$ as the convex regularizers where $\beta, \nu \neq 0$ are the regularization parameters.

The minimization problem in equation \ref{eqn:for} can be reformulated such that stochastic gradient descent can be performed to find the optimum value. Below is an equivalent formulation from Theorem $1$ in \citet{natole2018stochastic}-
$$
\min_{\mathbf{w},a,b} \max_{\zeta \in \mathbb{R}}{\E[F(\mathbf{w},a,b,\zeta;z)]+\Omega(\mathbf{w})}
$$
where the expectation is with respect to $z=(x,y)$ and 
\begin{align*}
    F(\mathbf{w},&a,b,\zeta;z) = (1- p)(\mathbf{w}^Tx - a)^2\mathbbm{I}_{[y=1]}\\ &+ p(\mathbf{w}^Tx - b)^2\mathbbm{I}_{[y=-1]} + 2(1 + \zeta)\mathbf{w}^Tx(p\mathbbm{I}_{[y=-1]}\\& - (1 - p)\mathbbm{I}_{[y=1]}) - p(1 - p)\zeta^2 
\end{align*}
Thus, $f(\mathbf{w})=\min_{a,b} \max_{\zeta \in R}{\E[F(\mathbf{w},a,b,\zeta;z)]}$. \citet{natole2018stochastic} also state that the optimal choices for $a,b,\zeta$ satisfy : 
\begin{align*}
a(\mathbf{w})&=\mathbf{w}^T\E[x|y=1] \\ 
b(\mathbf{w})&=\mathbf{w}^T\E[x|y=-1] \\ \zeta(\mathbf{w})&=\mathbf{w}^T(\E[x'|y'=-1]-\E[x|y=1])
\end{align*}
An important thing to note here is that we differentiate the objective function only with respect to $\mathbf{w}$ and do not compute the gradient with respect to the other parameters which themselves depend on $\mathbf{w}$. This is the reason why existing methods cannot be applied directly.
\section{Method}
The major issue that slows down convergence for SGD is the decay of the step size to $0$ as the iteration increase. This is necessary for mitigating the effect of variance introduced by random sampling in SGD. We apply the Prox-SVRG method on the reformulation of AUC to derive the proximal SVRG algorithm for AUC maximization given in Algorithm \ref{alg:svrg}. We store a $\Tilde{\mathbf{w}}$ after every $m$ Prox-SGD iterations that is progressively closer to the optimal $\mathbf{w}$ (essentially an estimate of the optimal value of (\ref{eqn:for}). Full gradient $\Tilde{\bm{\mu}}$ is computed whenever $\Tilde{\mathbf{w}}$ gets updated---after every $m$ iterations of Prox-SGD:
\begin{align*}
    \Tilde{\bm{\mu}} = \frac{1}{n} \sum_{i=1}^{n} G(\Tilde{\mathbf{w}},z_{i})
\end{align*}
where  $G(\mathbf{w};z)=\partial_\mathbf{w} F(\mathbf{w},a(\mathbf{w}),b(\mathbf{w}),\zeta(\mathbf{w});z)$, $n$ is the number of samples and $\Tilde{\bm{\mu}}$ is used to update next $m$ gradients. Next $m$ iterations are initialized by $\mathbf{w}_0 = \Tilde{\mathbf{w}}$. For each iteration, we randomly pick $i_t \in \{ 1,...,n \}$ and compute 
\begin{align*}
\mathbf{\hat{w}}_t = \mathbf{w}_{t-1} - \eta \mathbf{v}_{t-1}    
\end{align*}
where $\mathbf{v}_t = G(\mathbf{w}_t,z_{i_{t-1}}) -G(\Tilde{\mathbf{w}},z_{i_{t-1}}) + \Tilde{\bm{\mu}}$ and then the proximal step is taken
\begin{align*}
    \mathbf{w}_t = \prox_{\eta,\Omega}(\mathbf{\hat{w}}_t)
\end{align*}
Notice that if we  take expectation of $G(\Tilde{\mathbf{w}},z_{i_{t-1}})$ with respect to $i_t$ we get $\E[G(\Tilde{\mathbf{w}},z_{i_{t-1}})]= \Tilde{\bm{\mu}}$. Now if we take expectation of $\mathbf{v}_t$ with respect to $i_t$ conditioned on $\mathbf{w}_{t-1}$, we can get the following:
\begin{align*}
    \E[\mathbf{v}_{t}| \mathbf{w}_{t-1}] & = \E[G(\mathbf{w},z_{i_{t-1}})] - \E[G(\Tilde{\mathbf{w}},z_{i_{t-1}})] + \Tilde{\bm{\mu}} \\
    & = \frac{1}{n} \sum_{i=1}^{n} G(\Tilde{\mathbf{w}}_{t-1},z_{i})
\end{align*}
Hence the modified direction $\mathbf{v}_{t}$ is stochastic gradient of $G$ at $\mathbf{w}_{t-1}$. However, the variance $\E\|\mathbf{v}_t-\partial f(\mathbf{w}_{t-1})\|^2$ can be much smaller than $\E\|G(\mathbf{w}_{t-1},z_{i_{t-1}})-\partial f(\mathbf{w}_{t-1})\|^2$ which we will show in section \ref{sub:var}. We will also show that the variance goes to 0 as the algorithm converges. Thus, this is a multi-stage scheme to explicitly reduce the variance of the modified proximal gradient.
 
\begin{algorithm}[H]
\caption{Proximal SVRG for AUC maximization}\label{alg:svrg}
 \textsc{Input} Constant step size $\eta$ and update frequency $m$\\
  \textsc{Initialize} $\Tilde{\textsc{w}}_0$
\begin{algorithmic}
\For  {$s = 1,2,...$}
\State $\Tilde{\mathbf{w}} = \Tilde{\mathbf{w}}_{s-1}$
\State $\Tilde{\bm{\mu}} = \frac{1}{n} \sum_{i=1}^{n} G(\Tilde{\mathbf{w}},z_{i})$
\State $\mathbf{w}_0 = \Tilde{\mathbf{w}}$
\For {$t = 1,2,...,m$}
\State Randomly pick $i_t \in \{ 1,..,n \}$ and update weight
\State $\hat{\mathbf{w}}_t = \mathbf{w}_{t-1} - \eta (G(\mathbf{w}_{t-1},z_{i_{t}}) -G(\Tilde{\mathbf{w}},z_{i_{t}}) + \Tilde{\bm{\mu}})$
\State $\mathbf{w}_t = \prox_{\eta \Omega}(\hat{\mathbf{w}}_t)$
\EndFor
\State $\Tilde{\mathbf{w}}_s = \mathbf{w}_m$
\EndFor
\end{algorithmic}
\end{algorithm}

\section{Convergence Analysis}
In this section, we analyze the convergence rate of \textsc{VRSPAM} formally. We first define some lemmas which will be used for proving the Theorem \ref{thm1} which is the main theorem proving the geometric convergence of Algorithm \ref{alg:svrg}. First is the Lemma \ref{lemm:natole} from \citet{natole2018stochastic} which states that $\partial_{\mathbf{w}} F(\mathbf{w}_t,a(\mathbf{w}_t),b(\mathbf{w}_t),\alpha (\mathbf{w}_t);{z}_t)$ is an unbiased estimator of the true gradient. As we are not calculating the true gradient in \textsc{VRSPAM}, we need the following Lemma to prove the convergence result.
\begin{lemma}[\citep{natole2018stochastic}]
Let $\mathbf{w}_t$ be given by \textsc{VRSPAM} in Algorithm \ref{alg:svrg}. Then, we have
\begin{align*}
    \partial f(\mathbf{w}_t) = \E_{z_t} [\partial_{\mathbf{w}} F(\mathbf{w}_t,a(\mathbf{w}_t),b(\mathbf{w}_t),\alpha (\mathbf{w}_t);{z}_t)]
\end{align*}
\label{lemm:natole}
 \end{lemma}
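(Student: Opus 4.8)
The plan is to apply the envelope theorem (a saddle-point version of Danskin's theorem) to the inner optimization that defines $f$. Write $\bar{F}(\mathbf{w},a,b,\zeta) := \E_{z}[F(\mathbf{w},a,b,\zeta;z)]$, so that by the reformulation from \citet{natole2018stochastic} we have $f(\mathbf{w}) = \bar{F}(\mathbf{w},a(\mathbf{w}),b(\mathbf{w}),\zeta(\mathbf{w}))$, where $a(\mathbf{w}),b(\mathbf{w})$ are the inner minimizers, $\zeta(\mathbf{w})$ is the inner maximizer, and all three are given by the explicit linear expressions stated above (with $\alpha$ in the lemma identified with $\zeta$).

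First I would differentiate $f$ via the chain rule, treating it as a composition of $\bar{F}$ with the optimizer map $\mathbf{w} \mapsto (a(\mathbf{w}),b(\mathbf{w}),\zeta(\mathbf{w}))$:
\begin{align*}
\partial f(\mathbf{w}) &= \partial_{\mathbf{w}} \bar{F} + \partial_a \bar{F}\,\partial_{\mathbf{w}} a(\mathbf{w}) \\
&\quad + \partial_b \bar{F}\,\partial_{\mathbf{w}} b(\mathbf{w}) + \partial_\zeta \bar{F}\,\partial_{\mathbf{w}} \zeta(\mathbf{w}),
\end{align*}
with every partial of $\bar{F}$ evaluated at $(\mathbf{w}, a(\mathbf{w}), b(\mathbf{w}), \zeta(\mathbf{w}))$. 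The second step is to observe that since $(a(\mathbf{w}),b(\mathbf{w}))$ minimizes and $\zeta(\mathbf{w})$ maximizes $\bar{F}$ for fixed $\mathbf{w}$, the first-order stationarity conditions give $\partial_a \bar{F} = \partial_b \bar{F} = \partial_\zeta \bar{F} = 0$ at these optimal values. This annihilates the three cross terms irrespective of the (finite) derivatives of the optimizer map, leaving $\partial f(\mathbf{w}) = \partial_{\mathbf{w}} \bar{F}(\mathbf{w},a(\mathbf{w}),b(\mathbf{w}),\zeta(\mathbf{w}))$. Finally I would interchange the gradient and the expectation to arrive at $\partial f(\mathbf{w}) = \E_z[\partial_{\mathbf{w}} F(\mathbf{w},a(\mathbf{w}),b(\mathbf{w}),\zeta(\mathbf{w});z)]$, which is the claim.

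The routine verifications are (i) checking the stationarity conditions explicitly: $\bar{F}$ is quadratic in $(a,b,\zeta)$, so solving $\partial_a\bar{F}=0$, $\partial_b\bar{F}=0$, $\partial_\zeta\bar{F}=0$ recovers exactly the stated closed forms $a(\mathbf{w})=\mathbf{w}^T\E[x|y=1]$, and so on; and (ii) justifying the exchange of $\partial_{\mathbf{w}}$ and $\E$, which follows from dominated convergence using the boundedness assumption $\|x\| \leq M$ together with the at-most-quadratic growth of $F$. The main subtlety to handle carefully is the smoothness and uniqueness of the optimizer map. Here I would emphasize that because the closed forms for $a(\mathbf{w}),b(\mathbf{w}),\zeta(\mathbf{w})$ are linear in $\mathbf{w}$, the map is differentiable with bounded derivatives; hence the chain-rule cross terms are genuinely finite before being killed, and there is no hidden non-differentiability at the saddle point that could invalidate the envelope argument.
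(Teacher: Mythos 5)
Your argument is correct, but note that the paper you are being compared against does not actually prove this lemma: it imports it verbatim from \citet{natole2018stochastic} and only remarks that the proof depends on the objective formulation rather than on the algorithm, so the relevant comparison is with the proof in that reference. There, the identity is established by direct computation: the closed forms $a(\mathbf{w})$, $b(\mathbf{w})$, $\zeta(\mathbf{w})$ are substituted into $\E_z[\partial_{\mathbf{w}}F(\cdot)]$, the expectation is expanded using the class priors, and the result is matched term by term against the explicit gradient $\partial f(\mathbf{w}) = -2p(1-p)\E[(1-\mathbf{w}^T(x-x'))(x-x')\mid y=1,y'=-1]$. Your envelope-theorem route reaches the same conclusion more conceptually: the chain-rule cross terms are killed by the first-order stationarity of the inner optimizers, and you correctly observe that the usual delicacy of Danskin-type arguments at saddle points is absent here because $\bar F$ is a decoupled quadratic in $(a,b)$ and $\zeta$ with unique optimizers that are linear in $\mathbf{w}$, so the composite is genuinely smooth. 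Your checks (stationarity recovers the stated closed forms, and dominated convergence via $\|x\|\le M$ justifies swapping $\partial_{\mathbf{w}}$ and $\E$) are the right ones; the only implicit assumption worth stating is $0<p<1$, which guarantees the strong convexity/concavity in $(a,b)$ and $\zeta$ that makes the optimizers unique. What your approach buys is robustness and brevity -- it would survive a change of surrogate loss so long as the inner problems remain smooth with unique interior optimizers -- whereas the direct expansion buys an explicit formula for $\partial f$ that the downstream variance and convergence analysis can reuse.
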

 \vspace{-0.7em}
 This Lemma is directly applicable in \textsc{VRSPAM} since the proof of the Lemma hinges on the objective function formulation and not on the algorithm specifics. 

 The next lemma provides an upper bound on the norm of difference of gradients at different time steps. 
\begin{lemma}[\citep{natole2018stochastic}]
\label{lemma:bound}
Let $\mathbf{w}_t$ be described as above. Then, we have
\begin{align*}
   \|G(\mathbf{w}_{t'} ; z_{t}) - G(\mathbf{w}_t ; z_t) \| \leq 8M^2 \| \mathbf{w}_{t'} - \mathbf{w}_t \|
\end{align*}
\end{lemma}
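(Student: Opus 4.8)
The plan is to exploit the fact that, once the optimal values $a(\mathbf{w}),b(\mathbf{w}),\zeta(\mathbf{w})$ are substituted in, the map $\mathbf{w}\mapsto G(\mathbf{w};z)$ is \emph{affine} in $\mathbf{w}$. Consequently the difference $G(\mathbf{w}_{t'};z_t)-G(\mathbf{w}_t;z_t)$ depends only \emph{linearly} on $\mathbf{w}_{t'}-\mathbf{w}_t$, and the Lipschitz estimate reduces to bounding a single matrix operator norm.

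First I would differentiate $F$ with respect to $\mathbf{w}$ while holding $a,b,\zeta$ fixed (exactly as the definition of $G$ prescribes), obtaining
\begin{align*}
G(\mathbf{w};z) &= 2(1-p)(\mathbf{w}^Tx-a)x\,\mathbbm{I}_{[y=1]} + 2p(\mathbf{w}^Tx-b)x\,\mathbbm{I}_{[y=-1]} \\
&\quad + 2(1+\zeta)x\big(p\,\mathbbm{I}_{[y=-1]} - (1-p)\mathbbm{I}_{[y=1]}\big),
\end{align*}
and only then substitute $a=a(\mathbf{w})$, $b=b(\mathbf{w})$, $\zeta=\zeta(\mathbf{w})$. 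Since each of these is linear in $\mathbf{w}$ — for instance $\mathbf{w}^Tx-a(\mathbf{w})=\mathbf{w}^T(x-\E[x\mid y=1])$ — every term takes the form (matrix)$\cdot\mathbf{w}$ plus a constant vector. Collecting terms, I would write $G(\mathbf{w};z)=A(z)\mathbf{w}+c(z)$, where $A(z)$ is a sum of rank-one matrices of the form $uv^T$.

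The second step is the subtraction: $G(\mathbf{w}_{t'};z_t)-G(\mathbf{w}_t;z_t)=A(z_t)(\mathbf{w}_{t'}-\mathbf{w}_t)$, so the constant $c(z_t)$ cancels and it suffices to show $\|A(z_t)\|_{op}\le 8M^2$. Because only one of $\mathbbm{I}_{[y=1]},\mathbbm{I}_{[y=-1]}$ is nonzero for a given $z_t$, $A(z_t)$ collapses to a sum of exactly two rank-one matrices. Using $\|x\|\le M$ together with Jensen's inequality (which gives $\|\E[x\mid y=\pm1]\|\le M$, hence $\|x-\E[x\mid\cdot]\|\le 2M$), each block $uv^T$ satisfies $\|uv^T\|_{op}=\|u\|\,\|v\|\le 4M^2$ once the factor of $2$ and the class priors $p,1-p\le 1$ are accounted for; the triangle inequality over the two blocks then yields the claimed $8M^2$. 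The lemma follows from the submultiplicative bound $\|A(z_t)(\mathbf{w}_{t'}-\mathbf{w}_t)\|\le\|A(z_t)\|_{op}\,\|\mathbf{w}_{t'}-\mathbf{w}_t\|$.

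The only genuine subtlety — and the step I would treat most carefully — is the bookkeeping when substituting the linear maps $a(\mathbf{w}),b(\mathbf{w}),\zeta(\mathbf{w})$ and reading off $A(z)$ correctly; in particular one must keep in mind that $G$ is the \emph{partial} gradient taken with $a,b,\zeta$ frozen \emph{before} substitution, as emphasized after the saddle-point reformulation. Everything downstream of identifying $A(z)$ is a routine norm estimate, so I do not expect any real obstacle beyond the affine algebra itself.
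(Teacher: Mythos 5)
Your proposal is correct and is essentially the paper's own argument: the paper likewise writes out the difference $G(\mathbf{w}_{t'};z_t)-G(\mathbf{w}_t;z_t)$ term by term (which is linear in $\mathbf{w}_{t'}-\mathbf{w}_t$ because $a,b,\zeta$ are linear in $\mathbf{w}$), bounds each of the two active terms by $4M^2\|\mathbf{w}_{t'}-\mathbf{w}_t\|$ using $\|x\|\le M$ and $\|\E[x\mid\cdot]\|\le M$, and sums. Your operator-norm phrasing via rank-one blocks is just a restatement of the same Cauchy--Schwarz estimate, so there is no substantive difference.
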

\begin{proof}
\begin{equation*}
    \begin{aligned}
\|G(\mathbf{w}_{t'} ; z_{t}) &- G(\mathbf{w}_t ; z_t) \| \leq 4M^2p\|\mathbf{w}_{t'}-\mathbf{w}_t\|\mathbbm{1}_{[y_t=-1]}
\\&+
4M^2(1-p)\|\mathbf{w}_{t'}-\mathbf{w}_t\|\mathbbm{1}_{[y_t=1]}\\&+4M^2p\|\mathbf{w}_{t'}-\mathbf{w}_t\|\mathbbm{1}_{[y_t=-1]}\\&+4M^2|p-\mathbbm{1}_{[y_t=1]}|\|\mathbf{w}_{t'}-\mathbf{w}_t\|
\\& \leq 8M^2 \| \mathbf{w}_{t'} - \mathbf{w}_t \|
 \end{aligned}
 \end{equation*}
 The proof directly follows by writing out the difference and using the second assumption on the boundedness of $\|x\|$.
\end{proof}

We now present and prove a result that will be necessary in showing convergence in Theorem \ref{thm1}
\begin{lemma}
\label{lem:lma1}
Let $C = \frac{1 + 128M^4\eta^2}{(1 + \eta \beta)^2}$ and $D =\frac{128M^4\eta^2}{(1 + \eta \beta)^2}$; if $\eta\leq \frac{\beta}{128M^4}$ then $C^m  +  DC\frac{C^m-1}{C-1} \leq 1$ holds true.
\end{lemma}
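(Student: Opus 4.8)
The plan is to collapse this inequality, which at first glance entangles the exponent $m$ with the geometric-series term, into a single $m$-free condition $C(1+D)\le 1$, and then verify that condition directly from the step-size bound.

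The opening move is to record the consequence of the hypothesis. Since $\eta\le \frac{\beta}{128M^4}$ gives $128M^4\eta^2\le \beta\eta$, I would write
\[
C=\frac{1+128M^4\eta^2}{(1+\eta\beta)^2}\le \frac{1+\beta\eta}{(1+\eta\beta)^2}=\frac{1}{1+\eta\beta}<1 .
\]
This bound does double duty: it certifies $C<1$, so that $\frac{C^m-1}{C-1}=\frac{1-C^m}{1-C}>0$ is a genuine, well-defined geometric sum (no division-by-zero issue), and the same estimate on $128M^4\eta^2$ will power the final step.

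Next I would perform the reduction. Rewriting the target as $C^m+DC\frac{1-C^m}{1-C}\le 1$ and subtracting $C^m$ turns it into $DC\frac{1-C^m}{1-C}\le 1-C^m$. Because $C<1$ forces both $1-C^m>0$ and $1-C>0$, I can cancel the positive factor $1-C^m$ and clear the positive denominator $1-C$, which reduces everything to the exponent-free inequality $C(1+D)\le 1$. The care required here is purely in the sign bookkeeping—both cancellations are valid precisely because $C<1$, which is exactly why establishing $C<1$ must come first; I expect this to be the one genuinely delicate point of the argument.

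Finally I would verify $C(1+D)\le 1$. Using $C\le \frac{1}{1+\eta\beta}$ together with $D=\frac{128M^4\eta^2}{(1+\eta\beta)^2}\le \frac{\beta\eta}{(1+\eta\beta)^2}$, I would estimate
\[
C(1+D)\le \frac{1}{1+\eta\beta}\Big(1+\frac{\beta\eta}{(1+\eta\beta)^2}\Big)=\frac{(1+\eta\beta)^2+\beta\eta}{(1+\eta\beta)^3}=\frac{1+3\eta\beta+\eta^2\beta^2}{(1+\eta\beta)^3},
\]
after which it remains only to compare the numerator with $(1+\eta\beta)^3=1+3\eta\beta+3\eta^2\beta^2+\eta^3\beta^3$, which dominates it since the leftover terms $2\eta^2\beta^2+\eta^3\beta^3$ are nonnegative for $\eta,\beta>0$. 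This closing comparison is a routine polynomial inequality, so once the sign-aware reduction is in place the proof is short.
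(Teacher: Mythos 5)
Your proof is correct. Both you and the paper ultimately rest on the same $m$-free inequality $DC \le 1-C$ (equivalently $C(1+D)\le 1$), so the skeleton is shared, but the way you each establish it differs. The paper starts from $128M^4\eta^2 \le \eta\beta$ and pushes it through a chain of algebraic rearrangements to reach $128M^4\eta^2 \le \frac{1-C}{C}$, then invokes $D \le 128M^4\eta^2$; your route instead bounds $C \le \frac{1}{1+\eta\beta}$ and $D \le \frac{\eta\beta}{(1+\eta\beta)^2}$ separately and reduces everything to the transparent polynomial comparison $1+3\eta\beta+\eta^2\beta^2 \le (1+\eta\beta)^3$, which is arguably easier to check at a glance. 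Your write-up also has the merit of making explicit the sign bookkeeping that the paper leaves implicit: the step from $D \le \frac{1-C}{C}$ to $DC\frac{C^m-1}{C-1} \le 1-C^m$ requires multiplying by $C\frac{1-C^m}{1-C}\ge 0$, which is only legitimate once one knows $0<C<1$ (and $m\ge 1$); you prove $C<1$ up front, the paper never states it. The one thing worth noting is that your reduction divides by $1-C^m$, which degenerates when $m=0$, but the claim is trivial in that case and the algorithm always has $m\ge 1$, so nothing is lost.
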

\begin{proof}
We start with:
    \begin{alignat*}{2}
    \eta & \leq \frac{\beta}{128M^4} \\
    \Rightarrow & 128M^4\eta^2 \leq \eta \beta \\
    \Rightarrow &128M^4\eta^2(2 + 128M^4\eta^2) \leq \eta \beta(2+1\eta \beta) \\
    \Rightarrow & 128M^4\eta^2 + (128M^4\eta^2)^2 \leq (\eta \beta)^2 + 2\eta \beta-128M^4\eta^2 \\
    \Rightarrow & 128M^4\eta^2 \leq \frac{(1+\eta \beta)^2 - 1-128M^4\eta^2}{1+128M^4\eta^2} \\
    \Rightarrow & 128M^4\eta^2 \leq \frac{1 - \frac{1+128M^4\eta^2}{(1+\eta \beta)^2}}{\frac{1+128M^4\eta^2}{(1+\eta \beta)^2}}
    \end{alignat*}
    Substituting values of $C$ and $D$ and using the condition that $D \leq 128M^4\eta^2$, we get
    \begin{align*}
    \Rightarrow & D \leq \frac{1-C}{C} \\
    \Rightarrow & DC\frac{C^m-1}{C-1}  \leq 1 - C^m \\
    \Rightarrow & C^m  +  DC\frac{C^m-1}{C-1}  \leq 1
\end{align*}
\end{proof}
The following is the main theorem of this paper stating the convergence rate of Algorithm 1 and its analysis.
\begin{theorem}
Consider \textsc{VRSPAM} (Algorithm \ref{alg:svrg}) and let $\mathbf{w^{*}} = \argmin_{\mathbf{w}} f(\mathbf{w}) + \Omega(\mathbf{w})$; if $\eta < \frac{\beta}{128M^4}$, then the following inequality holds true
\begin{align*}
    \alpha = C^m  +  DC\frac{C^m-1}{C-1} <  1
\end{align*}
and we have the geometric convergence in expectation:
\begin{align*}
\E [\|\mathbf{\tilde{w}}_{s}-\mathbf{w^{*}}\|^2] \leq \alpha^s \E [\|\mathbf{w_0}-\mathbf{w^{*}}\|^2]
\end{align*}
\label{thm1}
\end{theorem}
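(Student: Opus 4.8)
The plan is to adapt the Prox-SVRG analysis of \citet{xiao2014proximal}, but to source the contraction entirely from the strong convexity of $\Omega$ rather than from $f$. The anchor is the fixed-point identity $\mathbf{w}^{*} = \prox_{\eta\Omega}(\mathbf{w}^{*} - \eta\,\partial f(\mathbf{w}^{*}))$, which is equivalent to the optimality condition $0 \in \partial f(\mathbf{w}^{*}) + \partial\Omega(\mathbf{w}^{*})$. Because $\Omega$ is $\beta$-strongly convex, $\prox_{\eta\Omega}$ is a contraction with factor $\frac{1}{1+\eta\beta}$. Applying this to a single inner step, with $\hat{\mathbf{w}}_t = \mathbf{w}_{t-1} - \eta\mathbf{v}_t$ and $\mathbf{v}_t = G(\mathbf{w}_{t-1},z_{i_t}) - G(\Tilde{\mathbf{w}},z_{i_t}) + \Tilde{\bm{\mu}}$, I would obtain
\[ \|\mathbf{w}_t - \mathbf{w}^{*}\|^2 \le \tfrac{1}{(1+\eta\beta)^2}\,\big\|(\mathbf{w}_{t-1} - \mathbf{w}^{*}) - \eta(\mathbf{v}_t - \partial f(\mathbf{w}^{*}))\big\|^2 . \]

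Next I would take the expectation over $i_t$ conditioned on the history and expand the squared norm into $\|\mathbf{w}_{t-1}-\mathbf{w}^{*}\|^2$, a cross term, and the second-moment term $\eta^2\,\E\|\mathbf{v}_t - \partial f(\mathbf{w}^{*})\|^2$. By Lemma \ref{lemm:natole}, $\E[\mathbf{v}_t\mid\mathbf{w}_{t-1}] = \partial f(\mathbf{w}_{t-1})$, so the cross term equals $-2\eta\langle\mathbf{w}_{t-1}-\mathbf{w}^{*},\,\partial f(\mathbf{w}_{t-1})-\partial f(\mathbf{w}^{*})\rangle \le 0$ by convexity of $f$ (it is a convex quadratic in $\mathbf{w}$), and can be discarded. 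For the variance I would decompose $\mathbf{v}_t - \partial f(\mathbf{w}^{*}) = [G(\mathbf{w}_{t-1},z_{i_t}) - G(\mathbf{w}^{*},z_{i_t})] - (Y - \E[Y])$ with $Y = G(\Tilde{\mathbf{w}},z_{i_t}) - G(\mathbf{w}^{*},z_{i_t})$, then apply $\|a-b\|^2 \le 2\|a\|^2 + 2\|b\|^2$, bound $\E\|Y-\E Y\|^2 \le \E\|Y\|^2$, and invoke the Lipschitz estimate of Lemma \ref{lemma:bound} with constant $8M^2$ on each piece. This yields the clean bound
\[ \E\|\mathbf{v}_t - \partial f(\mathbf{w}^{*})\|^2 \le 128M^4\big(\|\mathbf{w}_{t-1}-\mathbf{w}^{*}\|^2 + \|\Tilde{\mathbf{w}}-\mathbf{w}^{*}\|^2\big), \]
in which the constant arises precisely as $2(8M^2)^2$.

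Combining the two displays gives a per-iteration recursion of the form $a_t \le C\,a_{t-1} + D\,a_0$, where $a_t = \E\|\mathbf{w}_t - \mathbf{w}^{*}\|^2$, the epoch starts at $\mathbf{w}_0 = \Tilde{\mathbf{w}}$, and $C,D$ are exactly the constants of Lemma \ref{lem:lma1}. Telescoping over the $m$ inner iterations and summing the geometric progression in $C$ collapses this to $a_m \le \alpha\,a_0$, with $\alpha$ the geometric-series expression $C^m + DC\frac{C^m-1}{C-1}$ of the statement. Since $\Tilde{\mathbf{w}}_s = \mathbf{w}_m$ and $\Tilde{\mathbf{w}}_{s-1} = \mathbf{w}_0$, this is the one-epoch contraction $\E\|\Tilde{\mathbf{w}}_s - \mathbf{w}^{*}\|^2 \le \alpha\,\E\|\Tilde{\mathbf{w}}_{s-1} - \mathbf{w}^{*}\|^2$; iterating over $s$ epochs gives the stated $\alpha^s$ rate, and Lemma \ref{lem:lma1} certifies $\alpha < 1$ whenever $\eta \le \beta/(128M^4)$.

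The main obstacle is the variance step together with verifying that the inner loop genuinely contracts. No shrinkage is available from $f$ (its convexity is used only to delete the cross term); the entire contraction must be extracted from the prox factor $\frac{1}{(1+\eta\beta)^2}$, and it must dominate the variance inflation $128M^4\eta^2$ that accumulates across the $m$ inner steps. Pinning down the correct constant in the variance bound, and then showing that the resulting geometric sum $C^m + DC\frac{C^m-1}{C-1}$ stays below $1$ for the prescribed step size, is the delicate part, and it is exactly what Lemma \ref{lem:lma1} is designed to supply.
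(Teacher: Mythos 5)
Your proposal is correct and follows essentially the same route as the paper: the fixed-point characterization of $\mathbf{w}^{*}$, the $\frac{1}{1+\eta\beta}$ contraction of $\prox_{\eta\Omega}$ from the strong convexity of $\Omega$, discarding the cross term via monotonicity of $\partial f$, the identical centering trick $\E\|Y-\E Y\|^2\le\E\|Y\|^2$ combined with the $8M^2$ Lipschitz bound to get the $128M^4$ variance constant, and the same $a_t\le C\,a_{t-1}+D\,a_0$ recursion telescoped over the epoch with Lemma \ref{lem:lma1} certifying $\alpha<1$. No substantive differences from the paper's argument.
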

\vspace{-0.9em}
For proving the above theorem, first we upper bound the variance of the gradient step and show that it approaches zero as $\mathbf{w_s}$ approaches $\mathbf{w^{*}}$.
 \subsection{Bounding the variance}
 First we present a lemma what will be necessary to find the bound on the variance of modified gradient $\mathbf{v}_t =  G(\mathbf{w}_t,z_{i_t}) - G(\Tilde{\mathbf{w}},z_{i_t}) + \Tilde{\bm{\mu}} $
 \begin{lemma}
Consider \textsc{VRSPAM} (Algorithm \ref{alg:svrg}), then $\E [\| \mathbf{v}_t -  \partial f(\mathbf{w}^*)] \|^2$  is upper bounded as:
\begin{align*}
      \E [\| \mathbf{v}_t - & \partial f(\mathbf{w}^*)] \|^2] \\ & \leq 2(8M^2)^2\|\mathbf{w}_t - \mathbf{w}^* \|^2 + 2(8M^2)^2\|\Tilde{\mathbf{w}} - \mathbf{w}^* \|^2
\end{align*}
\label{thm2}
\end{lemma}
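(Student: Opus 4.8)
The plan is to write $\mathbf{v}_t - \partial f(\mathbf{w}^*)$ as a sum of two pieces, one controlled directly by the Lipschitz-type estimate of Lemma~\ref{lemma:bound} and the other by a variance argument, and then combine them through the elementary inequality $\|\mathbf{a}+\mathbf{b}\|^2 \le 2\|\mathbf{a}\|^2 + 2\|\mathbf{b}\|^2$. Concretely, inserting $\pm\, G(\mathbf{w}^*, z_{i_t})$, I would set
\begin{align*}
\mathbf{a}_t &= G(\mathbf{w}_t, z_{i_t}) - G(\mathbf{w}^*, z_{i_t}), \\
\mathbf{b}_t &= G(\mathbf{w}^*, z_{i_t}) - G(\Tilde{\mathbf{w}}, z_{i_t}) + \Tilde{\bm{\mu}} - \partial f(\mathbf{w}^*),
\end{align*}
so that $\mathbf{v}_t - \partial f(\mathbf{w}^*) = \mathbf{a}_t + \mathbf{b}_t$ and hence $\E\|\mathbf{v}_t - \partial f(\mathbf{w}^*)\|^2 \le 2\,\E\|\mathbf{a}_t\|^2 + 2\,\E\|\mathbf{b}_t\|^2$. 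The factor $2$ in front of each term is exactly what appears in the claimed bound, which is a good sign that this is the intended split.

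For the first piece, Lemma~\ref{lemma:bound} applies verbatim for each fixed sample $z_{i_t}$, giving $\|\mathbf{a}_t\| \le 8M^2\|\mathbf{w}_t - \mathbf{w}^*\|$; taking expectation over $i_t$ yields $\E\|\mathbf{a}_t\|^2 \le (8M^2)^2\|\mathbf{w}_t - \mathbf{w}^*\|^2$, which is the first summand. For the second piece, the key observation is that $\mathbf{b}_t$ is mean zero: taking the expectation over the uniformly drawn index $i_t$ and using $\E_{i_t}[G(\Tilde{\mathbf{w}}, z_{i_t})] = \Tilde{\bm{\mu}}$ together with Lemma~\ref{lemm:natole} in the form $\E_{i_t}[G(\mathbf{w}^*, z_{i_t})] = \partial f(\mathbf{w}^*)$, the four terms cancel in pairs. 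I would then bound the variance by the raw second moment, $\E\|\mathbf{b}_t\|^2 = \E\|X - \E X\|^2 \le \E\|X\|^2$ with $X = G(\mathbf{w}^*, z_{i_t}) - G(\Tilde{\mathbf{w}}, z_{i_t})$, and invoke Lemma~\ref{lemma:bound} once more to obtain $\E\|\mathbf{b}_t\|^2 \le (8M^2)^2\|\Tilde{\mathbf{w}} - \mathbf{w}^*\|^2$. Substituting both estimates gives the result.

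The one point demanding care, and the main obstacle, is the cancellation that makes $\mathbf{b}_t$ zero-mean: it requires reading $\partial f(\mathbf{w}^*)$ in Lemma~\ref{lemm:natole} as the average $\frac{1}{n}\sum_{i=1}^{n} G(\mathbf{w}^*, z_i)$ matched to the empirical full gradient $\Tilde{\bm{\mu}}$, i.e. interpreting $\E_{z_t}$ as the expectation over the uniformly sampled training index, consistent with the paper's own computation of $\E[\mathbf{v}_t \mid \mathbf{w}_{t-1}]$. Once this identification is made, the decomposition is essentially forced and every remaining step is a direct application of Lemma~\ref{lemma:bound}; no strong convexity or smoothness beyond the Lipschitz estimate enters this lemma.
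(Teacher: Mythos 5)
Your proof is correct and follows essentially the same route as the paper's: the same insertion of $\pm\,G(\mathbf{w}^*, z_{i_t})$, the same use of $\|\mathbf{a}+\mathbf{b}\|^2 \le 2\|\mathbf{a}\|^2 + 2\|\mathbf{b}\|^2$, Lemma~\ref{lemma:bound} for both pieces, and the variance-bounded-by-second-moment argument for the zero-mean term. Your remark about identifying $\partial f(\mathbf{w}^*)$ with the empirical average matching $\Tilde{\bm{\mu}}$ makes explicit a step the paper leaves implicit, but the argument is the same.
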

 \begin{proof}
 \label{sub:var}
 Let the variance reduced update be denoted as $\mathbf{v}_t =  G(\mathbf{w}_t,z_{i_t}) - G(\Tilde{\mathbf{w}},z_{i_t}) + \Tilde{\bm{\mu}} $. As we know $\E[\mathbf{v}_t]=\partial f(\mathbf{w}_t)$, the variance of $\mathbf{v}_k$ can be written as below
\begin{flalign*}
    \E [\| &G(\mathbf{w}_t,z_{i_t}) - G(\Tilde{\mathbf{w}},z_{i_t}) + \Tilde{\bm{\mu}} - \partial f(\mathbf{w}^*)) \|^2]   \\ & \leq  2\E [\| G(\mathbf{w}_t,z_{i_t}) - G({\mathbf{w}}^*,z_{i_t}) \|^2] \\ & + 2\E [\| G({\mathbf{w}}^*,z_{i_t}) - G(\Tilde{\mathbf{w}},z_{i_t}) + \Tilde{\bm{\mu}} - \partial f(\mathbf{w}^*)) \|^2]
\end{flalign*}
Also, $\E[ G({\mathbf{w}}^*,z_{i_t}) - G(\Tilde{\mathbf{w}},z_{i_t})  ] =\partial f(\mathbf{w}^*) -  \partial f(\Tilde{\mathbf{w}}) $ from Lemma \ref{lemm:natole} and using the property that $\E [(X - \E [X])^2] \leq \E[X^2]$ we get 
\begin{flalign*}
    \E [\| &G(\mathbf{w}_t,z_{i_t}) - G(\Tilde{\mathbf{w}},z_{i_t}) + \Tilde{\bm{\mu}} - \partial f(\mathbf{w}^*)) \|^2]  \\ &\leq 2\E [\| G(\mathbf{w}_t,z_{i_t}) - G({\mathbf{w}}^*,z_{i_t}) \|^2] \\& + 2\E [\| G({\mathbf{w}}^*,z_{i_t}) - G(\Tilde{\mathbf{w}},z_{i_t}) \|^2]
\end{flalign*}
From Lemma \ref{lemma:bound}, we have $\| G(\mathbf{w}_t,z_{i_t}) - G({\mathbf{w}}^*,z_{i_t}) \| \leq 8M^2\|\mathbf{w}_t - \mathbf{w}^* \|$ and $\| G({\mathbf{w}}^*,z_{i_t}) - G(\Tilde{\mathbf{w}},z_{i_t}) \|\leq 8M^2\|\Tilde{\mathbf{w}} - \mathbf{w}^* \|$. Using this, we can upper bound the variance of gradient step as:
\begin{equation}
\begin{aligned}
    \E [\| G(\mathbf{w}_t,z_{i_t}) - G(\Tilde{\mathbf{w}},z_{i_t}) + \Tilde{\bm{\mu}} - \partial f(\mathbf{w}^*)) \|^2]  \\ \leq 2(8M^2)^2\|\mathbf{w}_t - \mathbf{w}^* \|^2 + 2(8M^2)^2\|\Tilde{\mathbf{w}} - \mathbf{w}^* \|^2
\end{aligned}
\label{eqn:varbound}
\end{equation}
We have the desired result.
\end{proof}
We now present a lemma giving the bound on the variance of modified gradient $\mathbf{v}_t$
\begin{lemma}
Consider \textsc{VRSPAM} (Algorithm \ref{alg:svrg}), then the variance of the $\mathbf{v}_t$ is upper bounded as:
\begin{align*}
      \E [\| \mathbf{v}_t - & \partial f(\mathbf{w}_t)] \|^2] \big) \\ & \leq 4(8M^2)^2\|\mathbf{w}_t - \mathbf{w}^* \|^2 + 2(8M^2)^2\|\Tilde{\mathbf{w}} - \mathbf{w}^* \|^2
\end{align*}
\end{lemma}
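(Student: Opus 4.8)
The plan is to deduce this bound from the preceding Lemma \ref{thm2}, which already controls $\E[\|\mathbf{v}_t - \partial f(\mathbf{w}^*)\|^2]$, by exploiting the unbiasedness relation established just before Algorithm \ref{alg:svrg}, namely $\E[\mathbf{v}_t \mid \mathbf{w}_t] = \partial f(\mathbf{w}_t)$. The key observation is that the left-hand side, $\E[\|\mathbf{v}_t - \partial f(\mathbf{w}_t)\|^2]$, is the genuine variance of $\mathbf{v}_t$ about its conditional mean $\partial f(\mathbf{w}_t)$, whereas Lemma \ref{thm2} measures the second moment about the fixed reference point $\partial f(\mathbf{w}^*)$. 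These two quantities differ only by the nonnegative squared bias $\|\partial f(\mathbf{w}_t) - \partial f(\mathbf{w}^*)\|^2$, so the stated lemma should be essentially immediate once this is recognized.

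Concretely, I would write $\mathbf{v}_t - \partial f(\mathbf{w}_t) = \big(\mathbf{v}_t - \partial f(\mathbf{w}^*)\big) - \big(\partial f(\mathbf{w}_t) - \partial f(\mathbf{w}^*)\big)$, expand the squared norm, and take the conditional expectation. Since $\partial f(\mathbf{w}_t) - \partial f(\mathbf{w}^*)$ is deterministic given $\mathbf{w}_t$ and $\E[\mathbf{v}_t - \partial f(\mathbf{w}^*) \mid \mathbf{w}_t] = \partial f(\mathbf{w}_t) - \partial f(\mathbf{w}^*)$, the cross term equals exactly $-2\|\partial f(\mathbf{w}_t) - \partial f(\mathbf{w}^*)\|^2$, and the identity collapses to $\E[\|\mathbf{v}_t - \partial f(\mathbf{w}_t)\|^2] = \E[\|\mathbf{v}_t - \partial f(\mathbf{w}^*)\|^2] - \|\partial f(\mathbf{w}_t) - \partial f(\mathbf{w}^*)\|^2$. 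This is simply the statement that the conditional mean minimizes mean-squared deviation, so $\E[\|\mathbf{v}_t - \partial f(\mathbf{w}_t)\|^2] \le \E[\|\mathbf{v}_t - \partial f(\mathbf{w}^*)\|^2]$. Substituting Lemma \ref{thm2} then yields $\E[\|\mathbf{v}_t - \partial f(\mathbf{w}_t)\|^2] \le 2(8M^2)^2\|\mathbf{w}_t - \mathbf{w}^*\|^2 + 2(8M^2)^2\|\Tilde{\mathbf{w}} - \mathbf{w}^*\|^2$, which is sharper than the claimed inequality and hence implies it; the leading constant $4$ in the statement is a convenient overestimate.

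If instead one wants to arrive at the coefficient $4$ through an explicit chain rather than by monotonicity, I would retain the gradient-gap term and bound it using Jensen's inequality together with Lemma \ref{lemma:bound}: because $\partial f(\mathbf{w}) = \E_z[G(\mathbf{w};z)]$ by Lemma \ref{lemm:natole}, we have $\|\partial f(\mathbf{w}_t) - \partial f(\mathbf{w}^*)\| \le \E_z\|G(\mathbf{w}_t;z) - G(\mathbf{w}^*;z)\| \le 8M^2\|\mathbf{w}_t - \mathbf{w}^*\|$, and absorbing a loose multiple of this term into the bound of Lemma \ref{thm2} lands on the stated constants.

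The main obstacle, and really the only subtlety, is handling the cross term correctly: one must invoke the unbiasedness $\E[\mathbf{v}_t \mid \mathbf{w}_t] = \partial f(\mathbf{w}_t)$ to see that it cancels (or contributes favorably), rather than discarding it through a naive triangle or Young inequality, which would inflate both constants past what is claimed. Everything else is routine substitution from the already-established lemmas.
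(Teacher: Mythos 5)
Your proof is correct, and it takes a genuinely different --- and in fact cleaner --- route than the paper's. The paper splits $\mathbf{v}_t - \partial f(\mathbf{w}_t) = (\mathbf{v}_t - \partial f(\mathbf{w}^*)) + (\partial f(\mathbf{w}^*) - \partial f(\mathbf{w}_t))$ and applies the Young-type bound $\|a+b\|^2 \le 2\|a\|^2 + 2\|b\|^2$, then invokes Lemma~\ref{thm2} for the first piece and Jensen plus Lemma~\ref{lemma:bound} for the second. You use the same splitting but evaluate the cross term exactly via the unbiasedness $\E[\mathbf{v}_t \mid \mathbf{w}_{t}] = \partial f(\mathbf{w}_t)$, arriving at the identity $\E\|\mathbf{v}_t - \partial f(\mathbf{w}_t)\|^2 = \E\|\mathbf{v}_t - \partial f(\mathbf{w}^*)\|^2 - \E\|\partial f(\mathbf{w}_t) - \partial f(\mathbf{w}^*)\|^2$, whence the sharper bound $2(8M^2)^2\|\mathbf{w}_t - \mathbf{w}^*\|^2 + 2(8M^2)^2\|\Tilde{\mathbf{w}} - \mathbf{w}^*\|^2$, which implies the stated one; your alternative chain (add back a nonnegative multiple of the gradient-gap term, bounded by $(8M^2)^2\|\mathbf{w}_t-\mathbf{w}^*\|^2$ via Jensen and Lemma~\ref{lemma:bound}) recovers the constant $4$ exactly. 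Your closing caution that a naive Young inequality ``would inflate both constants past what is claimed'' is precisely on point: a faithful application of the paper's own first inequality carries a factor $2$ in front of $\E\|\mathbf{v}_t - \partial f(\mathbf{w}^*)\|^2$, so substituting Lemma~\ref{thm2} should yield $6(8M^2)^2\|\mathbf{w}_t-\mathbf{w}^*\|^2 + 4(8M^2)^2\|\Tilde{\mathbf{w}}-\mathbf{w}^*\|^2$; the paper's displayed constants only emerge because that factor is silently dropped in its second step. Your argument therefore not only proves the lemma but repairs the paper's derivation. The only detail worth making explicit is the tower property when passing from the conditional identity to the unconditional expectation (the right-hand sides involve $\|\mathbf{w}_t-\mathbf{w}^*\|^2$, which is random over the history, so expectations should really appear there too --- a sloppiness shared with the paper's statement).
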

\begin{proof}
\begin{align*}
     \E &[\| \mathbf{v}_t -  \partial f(\mathbf{w}_t)] \|^2]  \\ & \leq
     2\E [\| \mathbf{v}_t -  \partial f(\mathbf{w}^*)] \|^2] + 2\E [\| \partial f(\mathbf{w}^*) -  \partial f(\mathbf{w}_t)] \|^2]   \\ 
     & \leq 2(8M^2)^2\|\mathbf{w}_t - \mathbf{w}^* \|^2 + 2(8M^2)^2\|\Tilde{\mathbf{w}} - \mathbf{w}^* \|^2 \\ &+ 2\E[\| G(\mathbf{w}_t,z_{i_t}) - G({\mathbf{w}}^*,z_{i_t}) \|^2] \\
     & \leq 4(8M^2)^2\|\mathbf{w}_t - \mathbf{w}^* \|^2 + 2(8M^2)^2\|\Tilde{\mathbf{w}} - \mathbf{w}^* \|^2
\end{align*}
where the second inequality uses Lemma \ref{thm2} and last inequality uses Lemma \ref{lemma:bound}.
\end{proof}

At the convergence, $\Tilde{\mathbf{w}} = \mathbf{w}^*$ and $\mathbf{w}_t = \mathbf{w}^*$. Thus the variance of the updates are bounded and go to zero as the algorithm converges.
Whereas in the case of the SPAM algorithm, the variance of the gradient does not go to zero as it is a stochastic gradient descent based algorithm.

We now present the proof of Theorem \ref{thm1}.
\subsection{Proof of Theorem \ref{thm1}}

From the first order optimality condition, we can directly write
\begin{align*}
    \mathbf{w}^* = \prox_{\eta \Omega}(\mathbf{w}^* - \eta\partial f(\mathbf{w}^*))
\end{align*}
Using the above we can write
\begin{align*}
    &\|\mathbf{w}_{t+1} - \mathbf{w}^* \|^2 \\ &= \|\prox_{\eta \Omega}(\hat{\mathbf{w}}_{t+1}) - \prox_{\eta \Omega}(\mathbf{w}^* - \eta\partial f(\mathbf{w}^*))\|^2
\end{align*}
Using Proposition 23.11 from \citet{bauschke2011convex}, we have $\prox_{\eta \Omega}$ is  $(1 + \eta \beta)$-cocoercieve and for any $\mathbf{u}$ and $\mathbf{w}$ using Cauchy Schwartz we can get the following inequality 
\begin{align*}
    \|\prox_{\eta \Omega}({\mathbf{u}})-\prox_{\eta \Omega}({\mathbf{w}})\| \leq \frac{1}{1 + \eta \beta} \|\mathbf{u} - \mathbf{w} \|
\end{align*}
From above we get 
\begin{align*}
    &\|\mathbf{w}_{t+1} - \mathbf{w}^* \|^2 \\ & \leq \frac{1}{(1+\eta \beta)^2} \|(\hat{\mathbf{w}}_{t+1}) - (\mathbf{w}^* - \eta\partial f(\mathbf{w}^*))\|^2 \\& \leq \frac{1}{(1+\eta \beta)^2} \|(\mathbf{w}_t - \mathbf{w}^*)  - \eta(G(\mathbf{w}_t,z_{i_t}) \\& - G(\Tilde{\mathbf{w}},z_{i_t}) + \Tilde{\bm{\mu}} - \partial f(\mathbf{w}^*))\|^2 
\end{align*}
Taking expectation on both sides we get
\begin{equation}
    \begin{aligned}
\label{eq:bound}
     &\E \|\mathbf{w}_{t+1}
     - \mathbf{w}^* \|^2  \leq \frac{1}{(1 + \eta \beta)^2} \big (  \eta^2 \E [\| G(\mathbf{w}_t,z_{i_t }) \\& - G(\Tilde{\mathbf{w}},z_{i_t}) + \Tilde{\bm{\mu}} - \partial f(\mathbf{w}^*)) \|^2] + \E [\| \mathbf{w}_t - \mathbf{w}^*  \|^2] - \\ & 
     2\eta \E [\langle \mathbf{w}_t - \mathbf{w}^* , G(\mathbf{w}_t,z_{i_t}) - G(\Tilde{\mathbf{w}},z_{i_t}) + \Tilde{\bm{\mu}} - \partial f(\mathbf{w}^*) \rangle] \big )
\end{aligned}
\end{equation}
Now, we first bound the last term $T = \E [\langle \mathbf{w}_t - \mathbf{w}^* , G(\mathbf{w}_t,z_{i_t}) - G(\Tilde{\mathbf{w}},z_{i_t}) + \Tilde{\bm{\mu}} - \partial f(\mathbf{w}^*) \rangle]$ in equation \ref{eq:bound}. Using Lemma \ref{lemm:natole} we can write
\begin{align*}
    T & = \E [\langle \mathbf{w}_t - \mathbf{w}^* , \E_{z_t}[G(\mathbf{w}_t,z_{i_t})] - \E_{z_t}[G(\Tilde{\mathbf{w}},z_{i_t})] \\&+ \Tilde{\bm{\mu}} - \partial f(\mathbf{w}^*)\rangle] \\
    & = \E [\langle \mathbf{w}_t - \mathbf{w}^* , \E_{z_t}[G(\mathbf{w}_t,z_{i_t})]  - \partial f(\mathbf{w}^*)\rangle] \\
    & = \E [\langle \mathbf{w}_t - \mathbf{w}^* ,  \partial f(\mathbf{w}_t ) - \partial f(\mathbf{w}^*)\rangle] \\
    & \geq 0
\end{align*}
Now, $\E \|\mathbf{w}_{t+1} - \mathbf{w}^* \|^2$ can be bounded by using above bound and Lemma \ref{thm2} as below
\begin{align*}
   & \E \|\mathbf{w}_{t+1} - \mathbf{w}^* \|^2 \leq \frac{1}{(1 + \eta \beta)^2} (\E [\|\mathbf{w}_t - \mathbf{w}^* \|^2] \\ &+ 2(8M^2)^2\eta^2(\E [\|\mathbf{w}_t - \mathbf{w}^* \|^2]  +  \E [\|\Tilde{\mathbf{w}} - \mathbf{w}^* \|^2]  )) \\
    & \leq \frac{1 + 128M^4\eta^2}{(1 + \eta \beta)^2} \E [\|\mathbf{w}_t - \mathbf{w}^* \|^2] \\& + \frac{128M^4\eta^2}{(1 + \eta \beta)^2} \E [\|\Tilde{\mathbf{w}} - \mathbf{w}^* \|^2]  
\end{align*}
Let $C = \frac{1 + 128M^4\eta^2}{(1 + \eta \beta)^2}$ and $D =\frac{128M^4\eta^2}{(1 + \eta \beta)^2} $, then after $m$ iterations
$\mathbf{w}_{t} = \mathbf{\tilde{w}}_s$ and $\mathbf{w_{0}} = \mathbf{\tilde{w}}_{s-1}$. Substituting this in the above inequality, we get 
\begin{align*}
    &\E \|\mathbf{\tilde{w}}_{s} - \mathbf{\mathbf{w}^*} \|^2 \\ & \leq C^m \big( \E \|\mathbf{\tilde{w}}_{s-1} - \mathbf{w}^* \|^2 +  \sum_{i=0}^{m-1}\frac{D}{C^i} \E \|\mathbf{\tilde{w}}_{s-1} - \mathbf{w}^* \|^2 \big) \\
    & \leq   \big(C^m  +  \sum_{i=0}^{m-1}\frac{DC^m}{C^i}  \big)\E \|\mathbf{\tilde{w}}_{s-1} - \mathbf{w}^* \|^2 \\
    & \leq   \big(C^m  +  DC^m\frac{1-(1/C^m)}{1-(1/C)}  \big)\E \|\mathbf{\tilde{w}}_{s-1} - \mathbf{w}^* \|^2 \\
     & \leq   \big(C^m  +  DC\frac{C^m-1}{C-1}  \big)\E \|\mathbf{\tilde{w}}_{s-1} - \mathbf{w}^* \|^2 \\
     & \leq   \alpha \E \|\mathbf{\tilde{w}}_{s-1} - \mathbf{w}^* \|^2
\end{align*}
where $\alpha = C^m  +  DC\frac{C^m-1}{C-1}$ is the decay parameter, and $\alpha < 1$ by using Lemma \ref{lem:lma1}. After $s$ steps in outer loop of Algorithm \ref{alg:svrg}, we get $\E \|\mathbf{\tilde{w}}_{s} - \mathbf{w}^* \|^2 \leq \alpha^s \E \|\mathbf{w}_{0} - \mathbf{w}^* \|^2$ where $\alpha <1$. Hence, we get geometric convergence of $\alpha^s$ which is much stronger than the $\mathcal{O}(\frac{1}{t})$ convergence obtained in \citet{natole2018stochastic}. In the next section we derive the time complexity of the algorithm and investigate dependence of $\alpha$ on the problem parameters.
\subsection{Complexity analysis}
To get $\E \| \mathbf{\tilde{w}}_s - \mathbf{w}^*\|^2 \leq \epsilon$, number of iterations $s$ required is
\begin{align*}
    s \geq \frac{1}{\log \frac{1}{\alpha}} \log \frac{\E \| \mathbf{{w}}_0 - \mathbf{w}^*\|^2}{\epsilon}
\end{align*}
At each stage, the number of gradient evaluations are $n+2m$ where $n$ is the number of samples and $m$ is the iterations in the inner loop and the complexity is $\mathcal{O}(n + m)(\log(\frac{1}{\epsilon}))$ i.e. Algorithm \ref{alg:svrg} takes $\mathcal{O}(n + m)(\log(\frac{1}{\epsilon}))$ gradient complexity to achieve accuracy of $\epsilon$. Here, the complexity is dependent on $M$ and $\beta$ as $m$ itself is dependent on $M$ and $\beta$.

Now we find the dependence of $\alpha$ and $m$ on $M$ and $\beta$. Let $\eta = \frac{\theta \beta}{128 M^4}$ where $0 < \theta < 1$, then 
\begin{align*}
    C & =  \frac{1 + 128M^4\eta^2}{(1 + \eta \beta)^2} =  \frac{1 + \frac{\theta^2\beta^2}{128M^4}}{(1 + \frac{\theta\beta^2}{128M^4})^2} \\
    & < \frac{1 + \frac{\theta \beta^2}{128M^4}}{(1 + \frac{\theta\beta^2}{128M^4})^2} \\
    & = \frac{1}{(1 + \frac{\theta\beta^2}{128M^4})} \\
    & = E 
\end{align*}
therefore $D = \theta (E - E^2)$ and $DC < \theta E^2(1-E)$, using the above equations we can simplify $\alpha$ as
\begin{align*}
    \alpha & = C^m + DC \frac{1-C^m}{1-C} \\
    & < C^m + \theta E^2(1-E)\frac{1-C^m}{1-C} \\
    & < C^m + \theta E^2 (1-C^m) \quad \because \frac{1-E}{1-C} < 1 \\
    & = \theta E^2 + C^m -\theta E^2 C^m
\end{align*}
In the above equation, only $C^m -\theta E^2 C^m$ depends on $m$, if we choose $m$ to be sufficiently large then $\alpha = \theta E^2$. An important thing to note here is that $\theta E < C < E$, now if we choose $m \approx 2  \frac{\log \theta}{\log E}$ then $\alpha \approx 2\theta E^2$ which is independent of $m$. Thus the time complexity of the algorithm is $\mathcal{O}(n + 2 \frac{\log \theta}{\log E})(\log(\frac{1}{\epsilon}))$ when $m = \Theta(\frac{\log \theta}{\log E})$. As the order has inverse dependency on $\log E = \log \frac{128M^4}{128M^4 + \theta \beta^2}$, increase in $M$ will result in increase in number of iterations i.e. as the maximum norm of training samples is increased, larger $m$ is required to reach $\epsilon$ accuracy.

\begin{figure*}[h!btp]
    
       \begin{minipage}[t]{0.33\textwidth}
        \centering
                \includegraphics[trim={0cm 0.1cm 0cm 0.1cm},clip,width=0.99\textwidth]{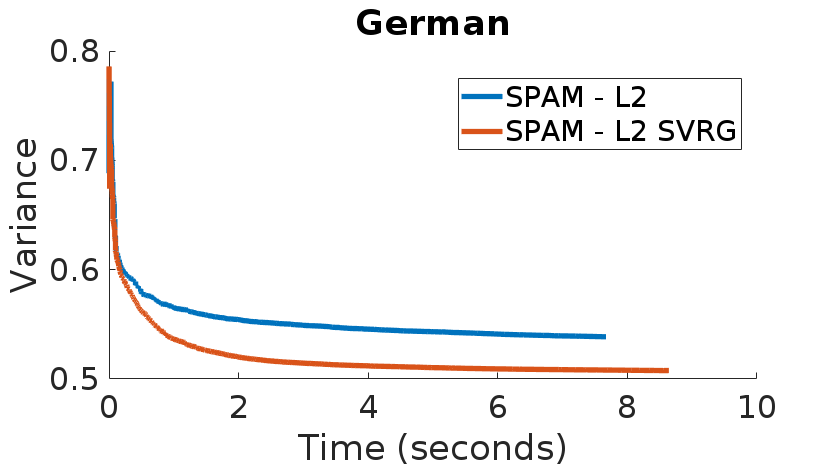}

    \end{minipage}
    \hfill
       \begin{minipage}[t]{0.33\textwidth}
        \centering
                \includegraphics[trim={0cm 0.1cm 0cm 0.1cm},clip,width=0.99\textwidth]{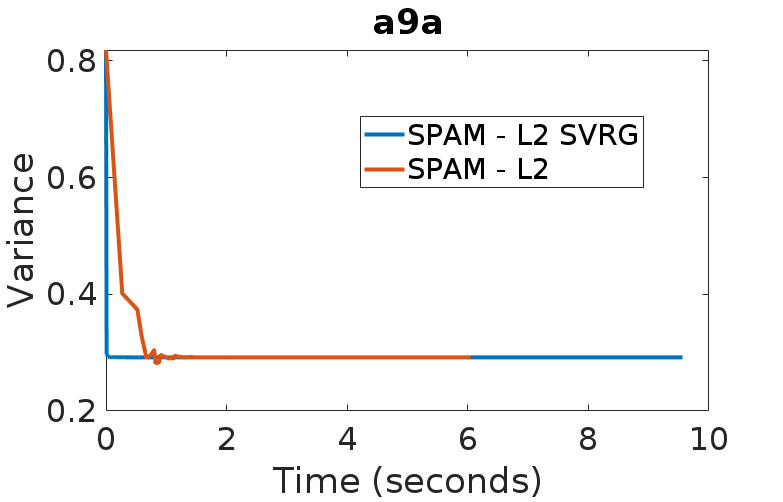}

    \end{minipage}
    \hfill
    \begin{minipage}[t]{0.33\textwidth}
        \centering
                \includegraphics[trim={0cm 0.1cm 0cm 0.1cm},clip,width=0.99\textwidth]{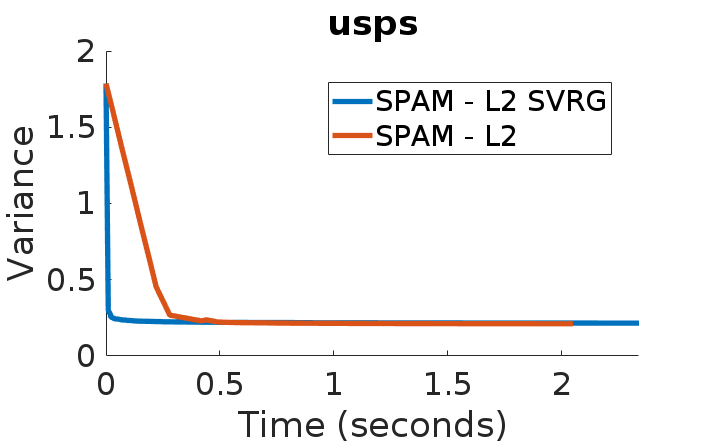}

    \end{minipage}
    
    \begin{minipage}[t]{0.33\textwidth}
        \centering
                \includegraphics[trim={0cm 0.1cm 0cm 0.1cm},clip,width=0.99\textwidth]{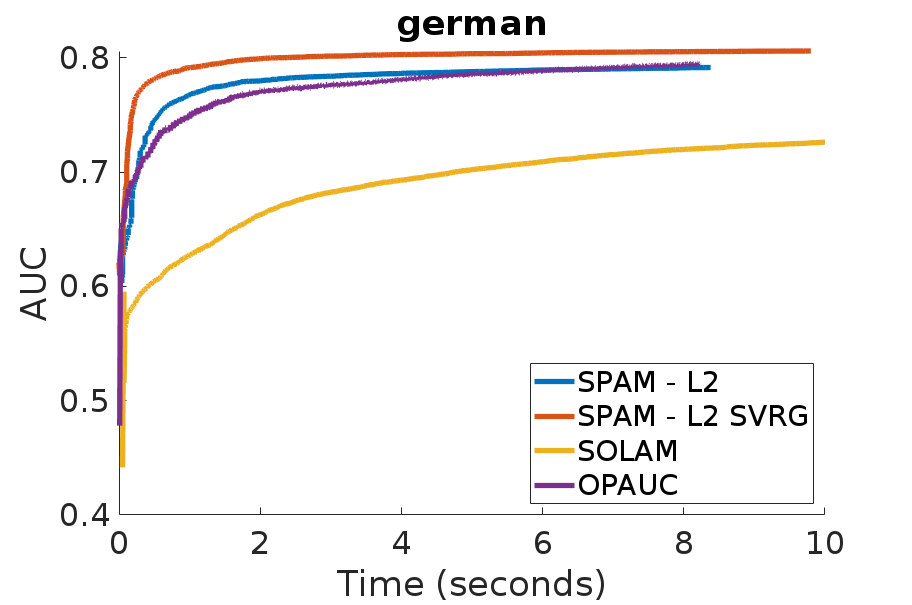}

    \end{minipage}
    \hfill
      \begin{minipage}[t]{0.33\textwidth}
        \centering
                \includegraphics[trim={0cm 0.1cm 0cm 0.1cm},clip,width=0.99\textwidth]{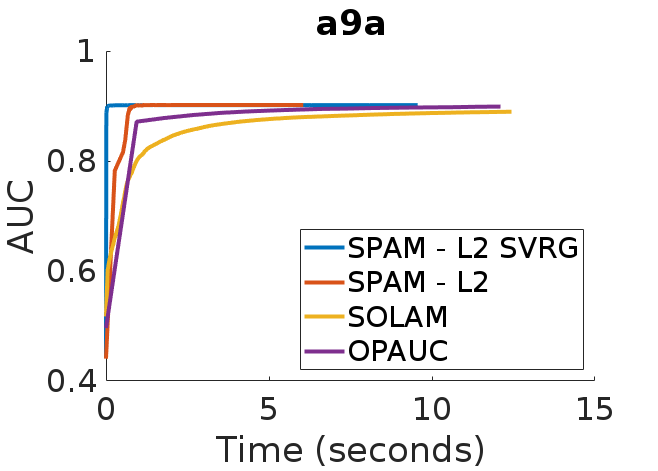}

    \end{minipage}
    \hfill
      \begin{minipage}[t]{0.33\textwidth}
        \centering
                \includegraphics[trim={0cm 0.1cm 0cm 0.1cm},clip,width=0.99\textwidth]{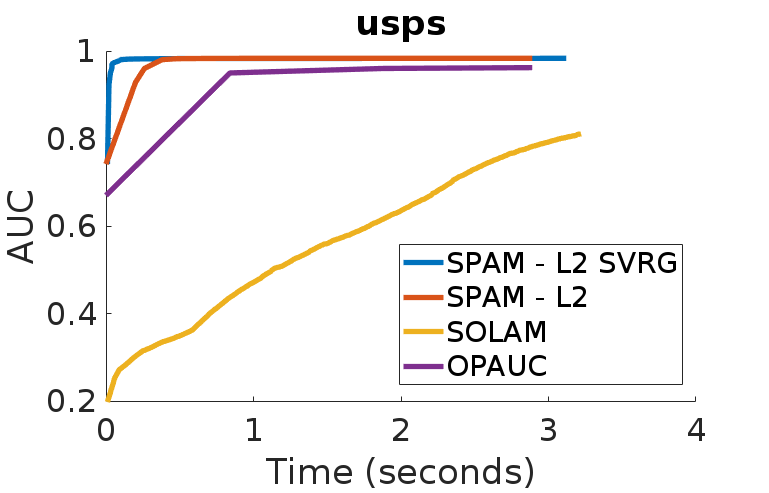}

    \end{minipage}

        \caption{The top row shows that \textsc{VRSPAM} (SPAM-L2-SVRG) has lower variance than SPAM-L2 across different datasets. The bottom row shows \textsc{VRSPAM} (SPAM-L2-SVRG) converges faster and performs better than existing algorithms on AUC maximization}
    \label{fig:only}
    
\end{figure*}
Now we will compare the time complexity of our algorithm with SPAM algorithm. First, we find the time complexity of SPAM. We will use Theorem 3 from \citet{natole2018stochastic} which states that SPAM achieves the following:
\begin{align*}
    \E [\| \mathbf{w}_{T+1}-&\mathbf{w}^* \|^2] \leq \frac{t_0}{T}\E [\| \mathbf{w}_{t_0}-\mathbf{w}^* \|^2] + c \frac{\log T}{T}
\end{align*}
where $t_0 = \text{max}\big( 2, \ceil[\big]{1 + \frac{(128M^4 + \beta ^2)^2}{128M^4\beta ^2}} \big)$, $T$ is the number of iterations and $c$ is a constant. Through averaging scheme developed by \citet{lacoste2012simpler} the following can be obtained:
\begin{align}
    \E [\| \mathbf{w}_{T+1}-&\mathbf{w}^* \|^2] \leq \frac{t_0}{T}\E [\| \mathbf{w}_{t_0}-\mathbf{w}^* \|^2] 
    \label{eqn:complx}
\end{align}
where $\E [\| \mathbf{w}_{t_0}-\mathbf{w}^* \|^2] \leq \frac{2\sigma_{*}^2}{\Tilde{C}^2_{\beta,M}} + \text{exp}\big( \frac{128M^4}{\Tilde{C}^2_{\beta,M}} \big) = F$, $\Tilde{C}^2_{\beta,M} = \frac{\beta}{(1+ \frac{\beta^2}{128M^4})^2}$ and $\E [\| G(\mathbf{w}^*;z)-\partial f(\mathbf{w}^*) \|^2] = \sigma_{*}^2$. Using equation \ref{eqn:complx}, time complexity of SPAM algorithm can be written as $\mathcal{O}(\frac{t_0 F}{\epsilon})$ i.e. SPAM algorithm takes $\mathcal{O}(\frac{t_0 F}{\epsilon})$ iterations to achieve $\epsilon$ accuracy. Thus, SPAM has lower per iteration complexity but slower convergence rate  as compared to VRSPAM. Therefore, VRSPAM will take less time to get a good approximation of the solution.

\begin{table*}[tt!]
\setlength{\tabcolsep}{0.31em}
  \centering
  \begin{tabular}{llllllllll}
    \toprule
    Name     & n & p   & \textsc{VRSPAM}-$L^2$     & \textsc{VRSPAM}-NET & SPAM-$L^2$ & SPAM-NET & SOLAM & OPAUC\\
    \midrule
    DIABETES  & 768 & 8 & .8299$\pm$.0323  & \textbf{.8305$\pm$.0319} & .8272$\pm$.0277 & .8085$\pm$.0431  & .8128$\pm$.0304 & .8309$\pm$.0350   \\
    GERMAN       & 1000 & 24    & .7902$\pm$0386 & .7845$\pm$.0398 & .7942$\pm$.0388 & .7937$\pm$.0386 & .7778$\pm$.0373 & \textbf{.7978$\pm$.0347}  \\
    SPLICE  & 3,175 & 60    & .9640$\pm$.0156  & \textbf{.9699$\pm$.0139}  & .9263$\pm$.0091 & .9267$\pm$.0090 & .9246$\pm$.0087 & .9232$\pm$.0099   \\
    USPS & 9,298 & 256  & \textbf{.8552$\pm$.006} & .8549$\pm$.0059 & .8542$\pm$.0388 & .8537$\pm$.0386 & .8395$\pm$.0061 & .8114$\pm$.0065  \\
    A9A & 32,561   & 123    & .\textbf{9003$\pm$.0045}      & .8981$\pm$.0046 & .8998$\pm$.0046 & .8980$\pm$.0047 & .8966$\pm$.0043 & .9002$\pm$.0047 \\
    W8A & 64,700   & 300 & \textbf{.9876$\pm$.0008}   & .9787$\pm$.0013 &.9682$\pm$.0020 & .9604$\pm$.0020 & .9817$\pm$.0015  & .9633$\pm$.0035  \\
    MNIST & 60,000 & 780    & \textbf{.9465$\pm$.0014}  & .9351$\pm$.0014 & .9254$\pm$.0025 & .9132$\pm$.0026 & .9118$\pm$.0029 & .9242$\pm$.0021    \\
    ACOUSTIC & 78,823 & 50     & .8093$\pm$.0033  & .8052$\pm$.033 & .8120$\pm$.0030 & .8109$\pm$.0028 & 8099$\pm$.0036 & \textbf{.8192$\pm$.0032}     \\
    IJCNN1 & 141,691 & 22  & \textbf{.9750$\pm$.001}      & .9745$\pm$.002 & .9174$\pm$.0024 & .9155$\pm$.0024 & .9129$\pm$.0030 & .9269$\pm$.0021  \\
    \bottomrule
  \end{tabular}
  \caption{AUC values (mean$\pm$std) comparison for different algorithms on test data. $n$ denotes the number of training instances and $p$ denote the dimension of the feature space.}
    \label{table:result}
\end{table*}

\section{Experiment}

Here we empirically compare \textsc{VRSPAM} with other existing algorithms used for AUC maximization.
We use the following two variants of our proposed algorithm based on the regularizer used:
\begin{itemize}
    \item 
$\textsc{VRSPAM}-L^2 : \Omega(\mathbf{w}) = \frac{\beta}{2}\|\mathbf{w} \|^2$ (Frobenius Norm Regularizer) 
\item
$\textsc{VRSPAM}-NET : \Omega(\mathbf{w}) = \frac{\beta}{2}\|\mathbf{w} \|^2_2 + {\beta_1}\|\mathbf{w} \|_1$ (Elastic Net Regularizer \citep{zou2005regularization}).
The proximal step for elastic net is given as $\operatorname*{arg\,min}_{\mathbf{w}} 
 \lbrace \frac{1}{2}\|\mathbf{w}-\frac{\hat{\mathbf{w}}_{t+1}}{\eta_t \beta +1}\|^2+ \frac{\eta_t\beta_1}{\eta_t\beta+1}\|\mathbf{w}\|_1 \rbrace$

\end{itemize}

\textsc{VRSPAM} is compared with SPAM, SOLAM \citep{ying2016stochastic} and one-pass AUC optimization algorithm (OPAUC) \citep{gao2013one}. SOLAM was modified to have the Frobenius Norm Regularizer (as in \citet{natole2018stochastic}). \textsc{VRSPAM} is compared against OPAUC with the least square loss.

All datasets are publicly available from \citet{chang2011libsvm} and \citet{frank2010uci}. Some of the datasets are multiclass, and we convert them to binary labels by numbering the classes and assigning all the even labels to one class and all the odd labels to another. The results are the mean AUC score and standard deviation of 20 runs on each dataset. All the datasets were divided into training and test data with 80\% and 20\% of the data. The parameters $\beta \in 10^{[-5:5]}$ and $\beta_1 \in 10^{[-5:5]}$ for $\textsc{VRSPAM}-L^2$ and $\textsc{VRSPAM}-NET$ are chosen by 5 fold cross-validation on the training set. All the code is implemented in \textsc{matlab} and will be released upon publication. We measured the algorithm's computational time using an Intel i-7 CPU with a clock speed of 3538 MHz.
\begin{itemize}

\item Variance results: In the top row of Figure \ref{fig:only}, we show the variance of the \textsc{VRSPAM} update ($\mathbf{v}_t$) in comparison with the variance of SPAM update ($G(\mathbf{w}_{t-1},z_{i_{t-1}})$) . We observe that the variance of \textsc{VRSPAM} is lower than the variance of SPAM and decreases to the minimum value faster, which is in line with Theorem {\ref{thm1}}.

    \item Convergence results: In the bottom row of Figure \ref{fig:only}, we show the performance of our algorithm compared to existing methods for AUC maximization. We observe that \textsc{VRSPAM} converges to the maximum value faster than the other methods, and in some cases, this maximum value itself is higher for \textsc{VRSPAM}.

\end{itemize}
Note that, the initial weights of \textsc{VRSPAM} are set to be the output generated by SPAM after one iteration, which is standard practice \citep{johnson2013accelerating}. Table\ref{table:result} summarizes the AUC evaluation for different algorithms. AUC values for SPAM-$L^2$, SPAM-NET, SOLAM and OPAUC were taken from \citet{natole2018stochastic}.

\section{Conclusion}
In this paper, we propose a variance reduced stochastic proximal algorithm for AUC maximization (\textsc{VRSPAM}). We theoretically analyze the proposed algorithm and derive a much faster convergence rate of $\mathcal{O}(\alpha^t)$ where $\alpha < 1$ (linear convergence rate), improving upon state-of-the-art methods \citet{natole2018stochastic} which have a convergence rate of $\mathcal{O}(\frac{1}{t})$ (sub-linear convergence rate), for strongly convex objective functions with per iteration complexity of one data-point. We gave a theoretical analysis of this and showed empirically \textsc{VRSPAM} converges faster than other methods for AUC maximization.

For future work, it will be interesting to explore if other algorithms to accelerate SGD can be used in this setting and if they lead to even faster convergence. It is also interesting to apply the proposed methods in practice to non-decomposable performance measures other than AUC. It would be interesting to extend the analysis to a non-convex and non-smooth regularizer using method presented in \citep{xu2019stochastic}.

\bibliography{ref}

\end{document}


%

%

\onecolumn
\aistatstitle{Instructions for Paper Submissions to AISTATS 2021: \\ 
Supplementary Materials}

\section{FORMATTING INSTRUCTIONS}

To prepare a supplementary pdf file, we ask the authors to use \texttt{aistats2021.sty} as a style file and to follow the same formatting instructions as in the main paper. 
The only difference is that the supplementary material must be in a \emph{single-column} format.
You can use \texttt{supplement.tex} in our starter pack as a starting point.

Note that reviewers are under no obligation to examine your supplementary material. 

\section{MISSING PROOFS}

The supplementary materials may contain detailed proofs of the results that are missing in the main paper.

\subsection{Proof of Lemma 3}

\textit{In this section, we present the detailed proof of Lemma 3 and then [ ... ]}

\section{ADDITIONAL EXPERIMENTS}

If you have additional experimental results, you may include them in the supplementary materials.

\subsection{The Effect of Regularization Parameter}

\textit{Our algorithm depends on the regularization parameter $\lambda$. Figure 1 below illustrates the effect of this parameter on the performance of our algorithm. As we can see, [ ... ]}

\vfill